\setlist{leftmargin=4.2mm}
\newcommand{\argmax}{\operatornamewithlimits{arg\,max}}
\DeclarePairedDelimiter\abs{\lvert}{\rvert}
\DeclarePairedDelimiter\norm{\lVert}{\rVert}
\DeclarePairedDelimiter\dotp{\langle}{\rangle}
\newcommand{\1}{\mathbf{1}}
\newtheorem{theorem}{Theorem}
\newenvironment{myquote}%
{\list{}{\leftmargin=0.16in\rightmargin=0in}\item[]}%
{\endlist}
\DeclareMathOperator{\diag}{diag}
\title{Batch Active Learning \\ Using Determinantal Point Processes}
\author{%
  Erdem B\i y\i k\\
  Electrical Engineering\\
  Stanford University\\
  Stanford, CA 94305\\
  \texttt{ebiyik@stanford.edu} \\
  \And
  Kenneth Wang \\
  Computer Science and Physics\\
  Stanford University\\
  Stanford, CA 94305\\
  \texttt{kwang411@stanford.edu} \\
  \AND
  Nima Anari\\
  Computer Science\\
  Stanford University\\
  Stanford, CA 94305\\
  \texttt{anari@cs.stanford.edu}\\
  \And
  Dorsa Sadigh\\
  Computer Science and Electrical Engineering\\
  Stanford University\\
  Stanford, CA 94305\\
  \texttt{dorsa@cs.stanford.edu}
}
\begin{document}

\maketitle

\begin{abstract}
  Data collection and labeling is one of the main challenges in employing machine learning algorithms in a variety of real-world applications with limited data. While active learning methods attempt to tackle this issue by labeling only the data samples that give high information, they generally suffer from large computational costs and are impractical in settings where data can be collected in parallel. Batch active learning methods attempt to overcome this computational burden by querying batches of samples at a time. To avoid redundancy between samples, previous works rely on some ad hoc combination of sample quality and diversity. In this paper, we present a new principled batch active learning method using Determinantal Point Processes, a repulsive point process that enables generating diverse batches of samples. We develop tractable algorithms to approximate the mode of a DPP distribution, and provide theoretical guarantees on the degree of approximation. We further demonstrate that an iterative greedy method for DPP maximization, which has lower computational costs but worse theoretical guarantees, still gives competitive results for batch active learning. Our experiments show the value of our methods on several datasets against state-of-the-art baselines.
\end{abstract}

\section{Introduction}
\vspace{-5px}
\label{sec:introduction}
The availability of large datasets has played a significant role in the success of machine learning algorithms in variety of fields ranging from societal networks to computer vision. 
However, in some fields, such as speech recognition \cite{varadarajan2009maximizing}, text classification \cite{cuong2013active}, image recognition \cite{sener2018active}, and robotics \cite{biyik2018batch,christiano2017deep,sadigh2017active,akrour2012april,jain2015learning}, collecting and labeling data can be time-consuming and costly, as they require interactions with humans or physical systems.

This means we need to either look for ways to collect large amounts of labeled data or develop methods that reduce the labeling effort. In this paper, we focus on the latter problem, where we investigate a general purpose active learning algorithm that could be used in a variety of applications. In the active learning framework, the user is kept in the learning loop. While there are several different variants of human-in-the-loop learning systems \cite{shivaswamy2015coactive,jain2015learning,jamieson2011active,fails2003interactive,lang2016feasibility}, we are interested in a model that asks for the labels of intelligently selected data samples. In this way, the model finds a good optima using much fewer samples \cite{chen2013near,sadigh2017active}. We refer to \cite{settles2009active} for a survey on earlier active learning works.

In the classical setting, active learning frameworks select a single data sample at each iteration. However, a single data sample is likely to have very little impact on most of the modern learning models \cite{sener2018active}. And more importantly, each iteration requires retraining of the model, which makes parallel labeling inapplicable and the computational cost a new challenge \cite{biyik2018batch}. As labeling requires direct interaction with humans, large computation times are undesirable.

These problems were tackled using batch active learning, which enables the labeling of several data points at a time~\cite{azimi2012batch,biyik2018batch,sener2018active}. With full generality, we put the batch active learning problem as follows.

\begin{myquote}
	\textbf{Problem Definition}: 
	We have an unlabeled dataset $\mathcal{X}\!\subseteq\!\mathbb{R}^{N\!\times\!d}$ of $N$ samples from $C$ classes. Can we train a high-accuracy classifier by labeling only $K$ samples, with batches of size $k$ per iteration?
\end{myquote}
While we define the problem for classification, it can be easily extended to regression problems, too.

Batch-mode active learning is special in that we cannot select the individually most informative data samples as the batch, because the samples can share a lot of information with each other, and this leads to highly suboptimal batches in practice, despite known theoretical bounds \cite{chen2013near}. In fact, the actual solution to the optimization involved in batch generation is known to require an exhaustive search \cite{guo2008discriminative}, so researchers generally rely on different approximations and heuristics.


\begin{figure*}[t]
	\centering
	\includegraphics[width=0.6\textwidth]{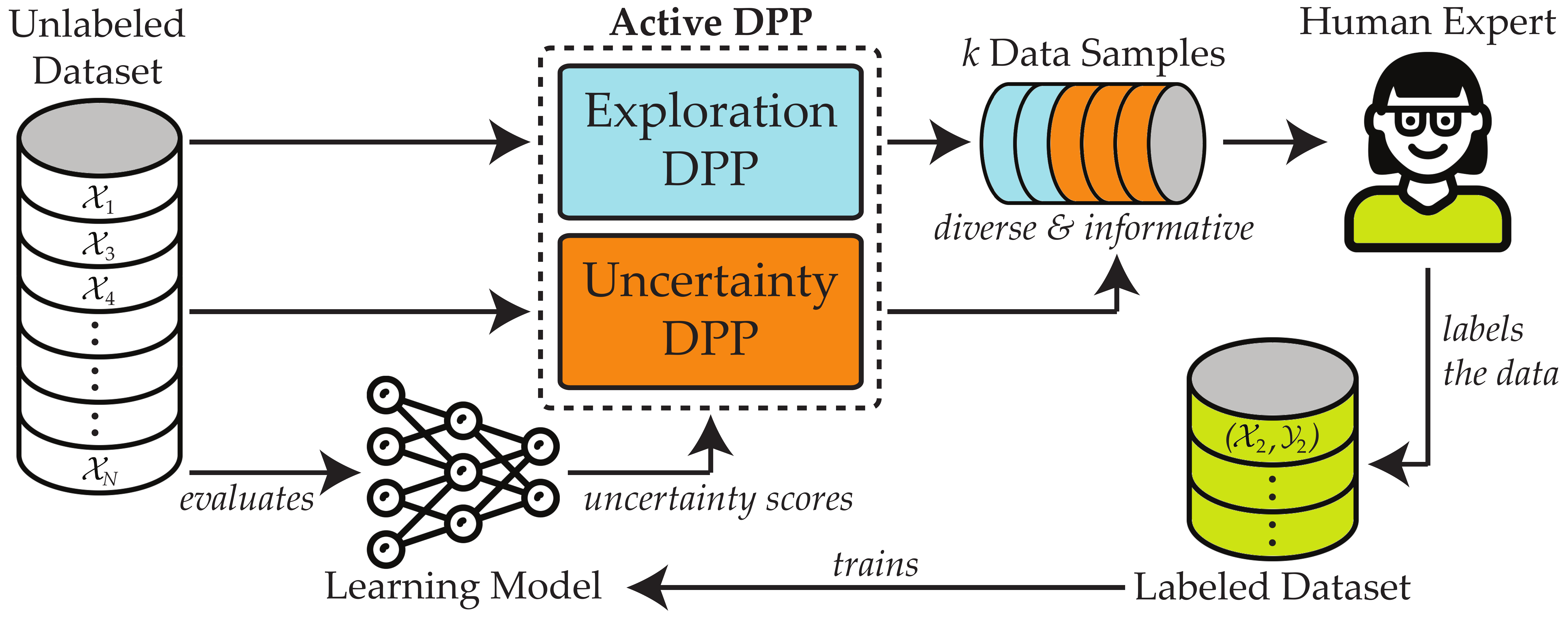}
	\caption{Our Active DPP method. Unlabeled samples and their uncertainty scores with respect to the learning model are given to our algorithm, which utilizes two DPPs. Uncertainty DPP selects data samples based on the uncertainty scores. Exploration DPP aims at finding new decision boundaries. They output a batch of samples that are both separately and jointly diverse and informative. These samples are then labeled by a human expert and the learning model is trained.}
	\vspace{-15px}
	\label{fig:frontfig}
\end{figure*}

In this paper, we present a new batch-active learning algorithm using Determinantal Point Processes (DPP), a class of repulsive point processes that is especially useful for generating diverse batches \cite{kulesza2012determinantal} and has been employed for several different machine learning applications over the past decade \cite{kulesza2011k,kulesza2010structured,gillenwater2012discovering,kathuria2016batched,dupuy2018learning,xie2017deep,mariet2018exponentiated,zhang2017determinantal,zhang2019active}. While the general idea of promoting diversity in batches for active learning is not novel \cite{biyik2018batch,ravi2018meta,cardoso2017ranked,yang2015multi,houlsby2011bayesian,patra2011batch,joshi2009multi,zhdanov2019diverse}, previous works relied on ad hoc combinations of two quantities that represent \emph{informativeness} and \emph{diversity} of data samples. 
Our key insight is that DPPs can be used to formalize a principled approach to balance informativeness and diversity; an approach that is not only easy to implement, but also competitive with the state-of-the-art methods. We visualize the overall framework in Fig.~\ref{fig:frontfig}.



The main contributions of this paper are:
\begin{itemize}[nosep]
	\item We present a theoretical method to approximate the mode of any DPP distribution based on a new rounding algorithm integrated with previously known convex relaxation methods \cite{nikolov2015randomized}.
	\item  We prove our new \emph{maximum coordinate rounding algorithm} matches the best possible approximation ratio without requiring the computationally expensive method of conditional expectations.
	\item We develop a novel batch active learning algorithm that selects batches as samples from or an approximate mode of a DPP distribution augmented with appropriate score values.
	\item We demonstrate our results in classification tasks on several datasets, along with results in preference-based reward learning presented in the Appendix due to space constraints.
\end{itemize}


\vspace{-5px}
\section{Background}
\vspace{-5px}
\label{sec:background}
A point process is a probability measure on a ground set $\mathcal{X}$ over finite subsets of $\mathcal{X}$. In accordance with our problem definition, we will have $\abs{\mathcal{X}}=N$.

An $L$-ensemble defines a DPP through a real, symmetric and positive semidefinite (PSD) $N$-by-$N$ kernel matrix $L$ \cite{borodin2005eynard}. Then, sampling a subset $X=A\subseteq \mathcal{X}$ has the probability
\begin{align}
P(X=A) \propto \det{L_A}
\label{eq:dpp}
\end{align}
where $L_A$ is an $\abs{A}$-by-$\abs{A}$ matrix that consists of the rows and columns of $L$ that correspond to the items in $A$. For instance, if $A=\{i,j\}$, then
\begin{align*}
P(X=A)\propto L_{ii}L_{jj} - L_{ij}L_{ji}.
\end{align*}
We can consider $L_{ij}=L_{ji}$ as a similarity measure between the dataset items $i$ and $j$. The nonnegativeness of the second term in the above expression shows an example of repulsiveness property of DPPs. This property makes DPPs the ubiquitous tractable point process to model negative correlations.

As $\det{L_A}$ can be positive for various $A$ with different cardinalities, we do not know $\abs{A}$ in advance. There is an extension of DPPs referred to as $k$-DPP where it is guaranteed that $\abs{A}=k$, and Eq.~\eqref{eq:dpp} remains valid \cite{kulesza2011k}. In this work, we employ $k$-DPPs and refer to them as DPPs for the rest of the paper for simplicity. The complexity of exact sampling from $k$-DPPs is equal to $O(N^{\omega} + Nk^3)$ where $O(N^\omega)$ is the complexity associated with matrix multiplication and is due to the necessary eigendecomposition on $L$. The output of the sampling is a subset that consists of $k$ values that are more diverse (less similar) than the uniform sampling (except the trivial values of $k$, or diagonal $L$).

The running time of $O(N^{\omega} + Nk^3)$ is very slow in practice, because we generally have large datasets, which correspond to large $N$. To overcome this issue, several approaches have been proposed, such as using a dual representation \cite{kulesza2010structured} that relies on the assumption that $\textrm{rank}(L)\ll N$, or adopting random projections \cite{gillenwater2012discovering} which relies on the result of \cite{johnson1984extensions} that the distances between high dimensional points can be approximately preserved after a logarithmic number of random projections.

In this work, we use a Markov Chain Monte Carlo (MCMC) method to approximate the sampling \cite{anari2016monte,li2016fast,mariet2018exponentiated, ALOV18a}, which will be easier and relatively more practical. In this method, a starting set of points is first selected to maximize the likelihood (see Section~\ref{sec:dpp_mode}). Then, several Monte Carlo steps are taken by removing and inserting one sample to the set for the mixing of the Markov chain.

Now, we explain what parameters we can have in an $L$-ensemble DPP. Since $L$ is known to be PSD, there always exists a matrix $D$ such that $L=D^\top D$. Then, we note that
\begin{align*}
P(X=A)\propto \det L_A = \textrm{Vol}^2(\{D_i\}_{i\in A}),
\end{align*}
so the probability is proportional to the square of the associated volume. In fact, by using a generalized version of DPP, we can approximately achieve:
\begin{align}
P(X=A)\propto \textrm{Vol}^{2\alpha}(\{D_i\}_{i\in A}).
\label{eq:alpha}
\end{align}
Previous work has shown the fast mixing guarantees of MCMC methods with $0\leq\alpha\leq1$ \cite{ALOV18a}, and the utility of exponentiated DPPs \cite{mariet2018exponentiated}. One can note that higher $\alpha$ enforces more diversity, because the probability of more diverse sets (larger volumes) will be boosted against the less diverse sets.

To construct $L$ for batch active learning, we further define the columns of $D$ as
\begin{align*}
D_i = q_i\phi_i
\end{align*}
where $q_i\in\mathbb{R}_{\geq0}$ is the \emph{score} of $i^\textrm{th}$ item that represents how much we want that item in our batch. We will use it to weight the samples based on how informative they are for the learning model. $\phi_i$ is a vector of similarity features and normalized such that $\norm{\phi_i}_2=1$. Defining a matrix $S$ such that $S_{ij}=\phi_i^\top\phi_j$, we introduce another parameter $\gamma$ that is related to the score values $q_i$:
\begin{align}
L_{ij} = q_i^{\gamma/\alpha}S_{ij}q_j^{\gamma/\alpha}.
\label{eq:gamma}
\end{align}
In this way, by increasing $\gamma$ for fixed $\alpha$, we give more importance to the scores while sampling.

One last important point for our work is that conditioning a DPP distribution still results in a DPP. That is, $P(X=A\cup B | B\subseteq X)$ is distributed according to a DPP with a transformed kernel:
\begin{align*}
L' = \left(\left[(L + I_{\bar B})^{-1}\right]_{\bar B}\right)^{-1} - I
\end{align*}
where $\bar B = \mathcal{X}\setminus B$, $I$ is the identity matrix, and $I_{\bar B}$ is the projection matrix with all zeros except at the diagonal entries $(i,i)$ for $\forall i\in {\bar B}$ where the entry is $1$.

\vspace{-5px}
\section{Approximating the Mode of a DPP}
\vspace{-5px}
\label{sec:dpp_mode}
With proper tuning of $\alpha$ and $\gamma$, the batches that are both diverse and informative will have higher probabilities of being sampled. This motivates us to find the mode of the distribution\footnote{$A^*$ is called the mode of the DPP distribution if $A^*=\argmax_A P(X=A)$.}, which will guarantee informativeness and diversity. Another advantage of using the mode, instead of a random sample from the distribution, is the fact that it is significantly faster, because MCMC sampling already attempts to compute the mode to choose the starting point.

However, finding the mode of a DPP is NP-hard \cite{ko1995exact}. It is hard to even approximate better than a factor of $2^{ck}$ for some $c\!>\!0$, under a cardinality constraint of size $k$ \cite{civril2013exponential}. Here, we first discuss two different algorithms to approximate the mode: Greedy and Convex Relaxation. Greedy algorithm suffers from poor approximation ratio, and convex relaxation algorithm is computationally prohibitive as it has super-linear dependence on $N$. We then present our novel \emph{maximum coordinate rounding algorithm} that matches the best possible approximation ratio without requiring the computationally expensive method of conditional expectations.

\textbf{Greedy Algorithm.}
One approach to approximate DPP-mode is greedily adding samples to the batch. More formally, to approximate
\begin{align*}
\argmax_A P(X=A) = \argmax_A \textrm{Vol}^{2\alpha}(\{D_i\}_{i\in A}),
\end{align*}
we greedily add samples to $A$. Let $A^{(m)}$ denote the set of selected samples at iteration $m$. We have
\begin{align*}
A^{(m+1)} = A^{(m)} \cup \{\argmax_j\textrm{Vol}^{2\alpha}(\{D_i\}_{i\in A^{(m)}\cup\{j\}})\},
\end{align*}
which we repeat until we obtain $k$ elements in $A$. \cite{ccivril2009selecting} showed that the greedy algorithm always finds a $k^{O(k)}$-approximation to the mode.

\textbf{Convex Relaxation Algorithm.}
The greedy algorithm does not provide the state-of-the-art approximation guarantee. \cite{nikolov2015randomized} showed that one can find an $e^k$-approximation to the mode by using a convex relaxation. We present the algorithm of \cite{nikolov2015randomized} stated in an equivalent form: Formally, consider the generating polynomial associated to the DPP:
\[ g(v_1,\dots,v_N)=\sum_{A:\abs{A}=k}\det(L_A)\prod_{i\in A}v_i.  \]
Finding the mode is equivalent to maximizing $g(v_1,\dots,v_N)$ over \emph{nonnegative integers} $v_1,\dots,v_N$ satisfying the constraint $v_1+\dots+v_N=k$. We get a relaxation by replacing integers with nonnegative reals, and using the insight that $\log(g)$ is a concave function which can be maximized efficiently:
\[ \max\left\{\log g(v_1,\dots,v_N)\;\vert\; v_1+\dots+v_N=k \right\}. \]
If $v_1^*,\dots, v_N^*$ is the maximizer, one can then choose a set $A$ of size $k$ with $P(A)\propto \prod_{i\in A} v_i^*$. Then $E[\det(L_A)]$ will be an $e^k$-approximation to the mode. Although this approximation holds in expectation, the probability that the sampled $A$ is an $e^k$-approximation can be exponentially small. To resolve this, \cite{nikolov2015randomized} resorted to the method of conditional expectations, each time deciding whether to include an element in the set $A$ or not.

The main drawback of this method is its computational cost. In particular, the running time of the methods that compute $g$ scale as a super-linear polynomial in $N$, which is problematic for the typical use cases where $N$ is large. Computing $g$ and $\nabla g$ is needed for solving the relaxation as well as running the method of conditional expectations.

\vspace{-5px}
\subsection{Maximum Coordinate Rounding}
\vspace{-5px}
We instead propose a new algorithm that avoids the method of conditional expectations. We also propose a heuristic method to find the maximizers $v_1^*,\dots,v_N^*$ by stochastic mirror descent, where each stochastic gradient computation requires sampling from a DPP. Approximate sampling from DPPs can be done in time $O(N\cdot k^2 \log k)$, scaling linearly with $N$ \cite{hermon2019modified}. Our algorithm is:
\begin{enumerate}[nosep]
	\item Find the nonnegative real maximizers $v_1^*,\!\dots,v_N^*$ of $\log g(v_1,\!\dots,v_N)$ subject to $v_1+\dots+v_N\!=\!k$.
	\item Let $v_i^*$ be the maximum among $v_1^*,\dots,v_N^*$. Put $i$ in $A$, and recursively find $k-1$ extra elements to put in $A$, working with the conditioned DPP.
\end{enumerate}
\begin{theorem}
	The above algorithm finds an $e^k$-approximation of the mode.
\end{theorem}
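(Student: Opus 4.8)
The plan is to prove, by induction on $k$, the stronger statement that the set $A$ returned by the algorithm satisfies $\det(L_A)\ge g(v^*)/e^k$, where $g=g_k$ is the generating polynomial of the current $k$-DPP and $v^*$ maximizes $\log g$ over the simplex $\{v\ge 0,\ \sum_j v_j=k\}$. This implies the theorem: for the optimal set $A^*=\argmax_{|A|=k}\det(L_A)$, plugging its $0/1$ indicator $\1_{A^*}$ into $g$ kills every monomial except the one indexed by $A^*$, so $g(v^*)\ge g(\1_{A^*})=\det(L_{A^*})=\mathrm{OPT}$, hence $\det(L_A)\ge\mathrm{OPT}/e^k$. (This is the same bound \cite{nikolov2015randomized} obtain in expectation, since $E[\det(L_A)]=g(v^*)/e_k(v^*)$ with $e_k(v^*)=\sum_{|A|=k}\prod_{i\in A}v_i^*\le k^k/k!\le e^k$; the point here is to achieve it deterministically.)

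The crux is a lemma on the coordinate we commit to: if $i$ attains $\max_j v_j^*$, then $v_i^*\le 1$. I would argue as follows. Since $\log g$ is concave and $v^*$ maximizes it over the simplex, the KKT stationarity condition gives $\partial_j\log g(v^*)=\lambda$ for every $j$ with $v_j^*>0$. Because $g$ is homogeneous of degree $k$, Euler's identity gives $\sum_j v_j^*\,\partial_j\log g(v^*)=k$, which together with $\sum_j v_j^*=k$ forces $\lambda=1$; as $v_i^*\ge k/N>0$, this yields $\partial_i g(v^*)=g(v^*)$. On the other hand $g$ is multi-affine, so $g(v)=v_i\,\partial_i g(v)+g(v)\big|_{v_i=0}$ with the second term nonnegative on the nonnegative orthant (all principal minors of the PSD matrix are $\ge 0$); evaluating at $v^*$ and dividing by $g(v^*)=\partial_i g(v^*)>0$ gives $v_i^*\le 1$.

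Now set up the recursion. Conditioning the DPP on $i\in X$ produces a $(k-1)$-DPP on $\mathcal X\setminus\{i\}$ whose principal minors are $\det(L'_B)=\det(L_{B\cup\{i\}})/L_{ii}$ (Schur complement), and whose generating polynomial $g'$ satisfies $L_{ii}\,g'(w)=\partial_i g(w)$, because $\partial_i g$ is exactly the sum of the monomials of $g$ containing $i$ (and does not depend on $w_i$). Evaluating at $v^*_{-i}$ and using $\partial_i g(v^*)=g(v^*)$ gives $g'(v^*_{-i})=g(v^*)/L_{ii}$. The vector $v^*_{-i}$ has total mass $k-v_i^*\in[k-1,k]$; rescaling it to the feasible point $w=\tfrac{k-1}{k-v_i^*}\,v^*_{-i}$ (total mass $k-1$) and using homogeneity of degree $k-1$,
\[
g'(w)=\Big(\tfrac{k-1}{k-v_i^*}\Big)^{k-1}\frac{g(v^*)}{L_{ii}}\ \ge\ \Big(\tfrac{k-1}{k}\Big)^{k-1}\frac{g(v^*)}{L_{ii}},
\]
where the inequality uses $v_i^*\le 1$. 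By the induction hypothesis the recursive call returns $B$ with $\det(L'_B)\ge e^{-(k-1)}\max_{\sum_j w_j=k-1}g'(w)\ge e^{-(k-1)}g'(w)$ for the specific $w$ above, so $A=B\cup\{i\}$ satisfies
\[
\det(L_A)=L_{ii}\det(L'_B)\ \ge\ \frac{1}{e^{k-1}}\Big(\frac{k-1}{k}\Big)^{k-1}g(v^*)\ =\ \frac{(1-1/k)^{k-1}}{e^{k-1}}\,g(v^*)\ \ge\ \frac{g(v^*)}{e^k},
\]
the last inequality being $(1-1/k)^{k-1}\ge 1/e$, equivalently $(1+\tfrac1{k-1})^{k-1}\le e$. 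The base case $k=1$ is immediate: $g$ is then linear, its maximizer concentrates on $\argmax_j L_{jj}$, and $\det(L_{\{i\}})=g(v^*)\ge g(v^*)/e$.

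I expect the main obstacle to be the coordinate lemma $v_i^*\le 1$: making the stationarity step $\partial_i g(v^*)/g(v^*)=1$ fully rigorous requires care about zero coordinates and the boundary behavior of $\log g$, and one must first dispose of the degenerate case $\mathrm{rank}(L)<k$ (where $g\equiv 0$ and every $k$-set is trivially optimal). The other delicate point is the constant bookkeeping — that the factor $(1-1/k)^{k-1}$ lost by rescaling $v^*_{-i}$ down to mass $k-1$ is exactly what the extra $e$ in passing from an $e^{k-1}$- to an $e^k$-guarantee can absorb; if $v_i^*$ could exceed $1$ this accounting would fail, which is why the lemma is the linchpin of the argument.
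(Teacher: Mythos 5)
Your proof is correct and follows essentially the same route as the paper: induction on $k$, first-order optimality of the relaxation to get $\partial_i g(v^*)\ge g(v^*)$ at the selected coordinate (you obtain equality via Euler's identity plus KKT, the paper gets the inequality via $\dotp{\nabla g,v}\le\norm{\nabla g}_\infty\norm{v}_1$), then rescaling $v^*_{-i}$ to mass $k-1$ and using $(k-1)$-homogeneity of $\partial_i g$ to lose only a factor $\left(\frac{k-1}{k}\right)^{k-1}\ge 1/e$ per step; your explicit $L_{ii}$/Schur-complement bookkeeping is a detail the paper leaves implicit, and it cancels exactly as you show. One correction to your own assessment, though: the lemma $v_i^*\le 1$, which you call the linchpin, is true but never needed. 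The inequality $\left(\frac{k-1}{k-v_i^*}\right)^{k-1}\ge\left(\frac{k-1}{k}\right)^{k-1}$ follows from $v_i^*\ge 0$ alone (together with $v_i^*<k$, which holds in the nondegenerate case), and if $v_i^*$ exceeded $1$ the rescaling factor would only get larger, i.e.\ the accounting would improve rather than fail; this is why the paper's proof needs nothing beyond $v_i^*>0$, which is automatic for the maximum coordinate.
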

The proof is by induction on $k$. We prove that there is only a factor of at most $e$ lost at each iteration of the second step. We provide the full proof, as well as the details of the stochastic mirror descent algorithm in the Appendix. We also provide an empirical comparison that shows the superior performance of the maximum coordinate rounding algorithm over greedy algorithm in the Appendix.

\vspace{-5px}
\section{Methods for Batch Active Learning}
\vspace{-5px}
\label{sec:methods}
Armed with the methods of constructing a DPP kernel $L$ that ensures diversity and informativeness, and approximating the mode of any DPP, we are now ready to present our DPP-based batch active learning methods. We start with describing some simple baselines in order to build ideas to finally introduce our \emph{Active DPP} methods. We first start with passive methods that select all the samples in the beginning. We later introduce active methods which iteratively select a small number of samples based on the information from previously selected and labeled data samples.

\textbf{Uniform Sampling.}
The most straightforward way to approach the problem is to take a uniformly random subset of data samples, have them labeled, and train a model using this subset. The problems associated with this na\"ive approach are: 1) Some of the samples, possibly the majority, will be almost completely redundant due to the shared information, 2) some parts of the space of data samples can be given more importance than the rest due to randomness. The second issue occasionally hurts and occasionally improves the training, but the former one almost surely hurts.

\textbf{Passive DPP.}
In this approach, our idea is to take the random subset using a DPP with $q_i=1\;\forall i\in\{1,\dots,N\}$, so that the samples will homogeneously cover the space of data samples (see Fig.~\ref{fig:labeled_samples}). This approach solves the unequal importance problem of uniform sampling that we described, although the performance may not be improved. It also mitigates the negative effects of the redundancy problem, because the samples will be more distant from each other, so they will be more informative on average. Hence, we can expect improved performance over uniform sampling.

\begin{figure*}[t]
	\centering
	\includegraphics[width=\textwidth]{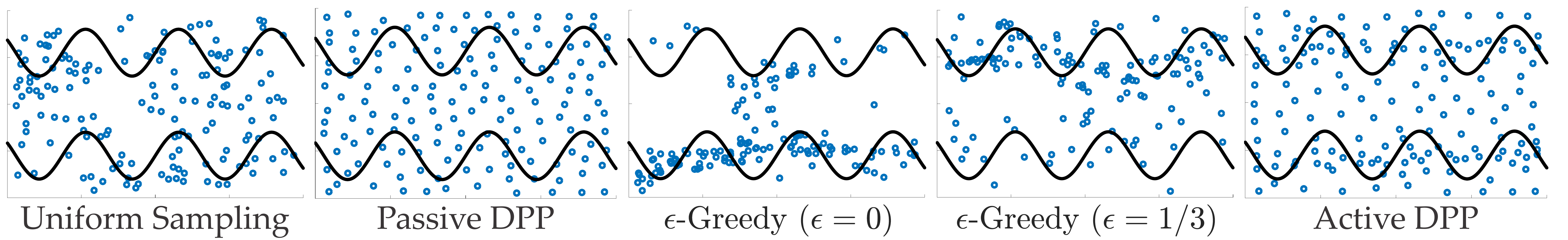}
	\vspace{-10px}
	\caption{Labeled samples within each method are shown. Black lines are the true decision boundaries. Uniform sampling method samples very close points and leads to redundancy. Passive DPP increases the distance between the samples. $\epsilon$-Greedy with no exploration takes more samples near the decision boundaries it could detect, but fails to detect some boundaries. $\epsilon$-Greedy with some exploration mitigates this problem but still has high redundancy. Active DPP overcomes this by enforcing diversity, while still capturing more samples near the decision boundaries.}
	\label{fig:labeled_samples}
	\vspace{-20px}
\end{figure*}

\textbf{Passive DPP-Mode.}
The approximate-mode of the DPP distribution is used as the batch instead of random sampling within Passive DPP approach.

\textbf{$\boldsymbol{\epsilon}$-Greedy.}
The methods we have described so far were all passive in the sense that they never utilized user feedback. The idea in active learning is to select the new samples based on the previously selected and labeled ones. For example, in the widely adopted uncertainty sampling, the model is first trained using only the samples that have already been labeled. Then, the new samples for labeling are selected based on how uncertain the model is on each of the unlabeled samples.

We will give explicit definitions of what uncertainty measures we use in Section~\ref{sec:technical_details}.

Two major drawbacks of uncertainty sampling algorithm are as follows. First, it requires re-training for each and every sample of $K$ samples. This poses a computational limitation. The second major issue is the lack of exploration. We expect high uncertainties near true decision boundaries. Therefore, the algorithm might always focus on the region near a spotted decision boundary and so might miss the other decision boundaries that can be far away. This is known as \emph{bias in active learning} and is well-observed \cite{bach2007active, dasgupta2008hierarchical, beygelzimer2009importance}. This phenomenon has been recently theoretically analyzed by \cite{mussmann2018uncertainty}.

Batch-mode active learning methods can be employed to overcome both issues. By selecting a batch of samples at a time, the computation burden problem is significantly reduced \cite{biyik2018batch}. We can also incorporate an $\epsilon$-greedy strategy to introduce exploration to handle the latter problem:
\begin{itemize}[nosep]
	\item We add the most uncertain $(1-\epsilon)k$ unlabeled data samples to the batch.
	\item We then uniformly randomly select $\epsilon k$ samples out of the remaining ones for exploration.
\end{itemize}
Figure~\ref{fig:labeled_samples} shows selected samples with $\epsilon$-Greedy when $\epsilon\in\{0,1/3\}$. The benefit of exploration can be clearly seen: The algorithm with $\epsilon=0$ misses an important portion of the decision boundary.

One crucial problem associated with $\epsilon$-greedy method is that although it actively selects the samples, it ignores the information shared by the correlated samples. As it selects $(1-\epsilon)k$ samples based only on uncertainty, it is very likely to select very correlated samples. This is because correlated samples tend to give close uncertainty values to each other as it can be seen from Fig.~\ref{fig:labeled_samples}. And having correlated samples in the batch leads to high redundancy.

\vspace{-5px}
\subsection{Active DPP-based Methods}
\vspace{-5px}
\textbf{Active DPP.}
We propose a DPP-based batch active learning algorithm to resolve all the problems associated with the aforementioned methods. For that, we incorporate \emph{dissimilarity} and \emph{uncertainty} values into DPP distribution for \emph{diversity} and \emph{informativeness}.

For \emph{dissimilarity}, we can set $S_{ij}=\phi_i^\top\phi_j$ either by constructing similarity features for each data sample, or by using the distance between the samples (assuming such a distance metric $h$ exists) with a Gaussian kernel, which is known to be PSD:
\begin{align}
S_{ij} = \exp\left(-\frac{h(\mathcal{X}_i,\mathcal{X}_j)^2}{2\sigma^2}\right)
\label{eq:similarity_matrix}
\end{align}
where $\sigma$ is a hyperparameter. For example, in \cite{wang2018active}, the authors used weighted Euclidean distance and adaptively set the weights by learning the importance of each dimension.

For \emph{uncertainty}, we set sample scores $q_i$ to be the uncertainty values while constructing the DPP kernel matrix $L$. Then, the hyperparameter $\gamma$ in Eq.~\eqref{eq:gamma} represents how much we care about uncertainties.

As sampling from this DPP creates diverse sets, it simultaneously enforces \emph{diversity} and \emph{informativeness} with proper tuning of $\alpha$ and $\gamma$, for which we describe our procedure in the Appendix.

To solve the lack of exploration, we again utilize DPPs. We construct another DPP kernel with $\gamma\!=\!0$ to have all the remaining samples equally important\footnote{For $\alpha=0$, Eq.~\eqref{eq:gamma} becomes ill-defined. We take scores as $1$, and the sampling reduces to uniform sampling.}. We take $\epsilon k$ samples from this exploration DPP.

While DPPs reduce within-batch redundancy, it is also important to mitigate the redundancy among different batches. Moreover, we can improve exploration by querying unexplored regions of the space more. To achieve both, we use the fact that conditioning a DPP still results in a DPP distribution. Therefore, we condition all the DPPs to contain the samples that are already selected. And we sample $(1-\epsilon)k$ (or $\epsilon k$ for exploration) more samples from this conditioned DPP for the current iteration.

\textbf{Active DPP-Mode.}
Similar to Passive DPP-Mode, we introduce a mode variant of Active DPP. In this method, we approximate the mode, for both exploration and uncertainty DPPs.




\vspace{-5px}
\section{Experiments \& Results}
\vspace{-5px}
\label{sec:results}
In this section, we present our experiments with classification tasks on synthetic and several different real datasets. We also experimented with preference-based reward learning on $4$ different robotics tasks \cite{palan2019learning,sadigh2016planning,todorov2012mujoco,wise2016fetch} and observed Active DPP-Mode method significantly outperforms all $4$ competitor methods \cite{biyik2018batch} in all $4$ tasks, except for one insignificant comparison. Due to space constraints, we present preference-based reward learning experiments and its theoretical convergence guarantees in the Appendix. In all classification experiments, $K=150$ and the problem is cold-start (no known labels in the beginning). We set the batch size $k=15$.

We make the codes for classification, preference-based reward learning and the maximum coordinate rounding algorithm to approximate the mode of a DPP publicly available for reproducibility\footnote{See \url{https://github.com/Stanford-ILIAD/DPP-Batch-Active-Learning}.}.

\vspace{-5px}
\subsection{Implementation Details}
\vspace{-5px}
\label{sec:technical_details}
For dissimilarity, we use the Gaussian kernel approach as in Eq.~\eqref{eq:similarity_matrix}. Specifically, we use Euclidean distance for $h$ to construct the similarity matrix $S$.

For uncertainty scores $q_i$, one can use entropy over the model-generated probabilities of samples' belonging to each class in classification tasks. In this work, to have better estimates of uncertainty, we use an ensemble of $10$ neural networks and calculate the entropy based on the mean probabilities over the ensemble \cite{tibshirani1996comparison}, which is common in practice \cite{christiano2017deep,lakshminarayanan2017simple}. Other alternatives to estimate uncertainty in classification include Bayesian neural networks \cite{depeweg2018decomposition}, the margin measure that quantifies the difference between top two class predictions \cite{ebert2012ralf,joshi2009multi}, or the distance to the decision boundary \cite{balcan2007margin}. For regression tasks, one can again use entropy or the variance of the model estimates. For preference-based reward learning, the expected volume removal \cite{sadigh2017active,biyik2019green} can be used.

For mode-approximation, even though maximum coordinate rounding could lead to higher overall performances, we resort to the greedy algorithm for our main experiments, as it is considerably faster.

We describe the tuning procedure we followed for the hyperparameters in the Appendix.

\vspace{-5px}
\subsection{Classification on a Synthetic Dataset}
\vspace{-5px}
\label{sec:synthetic_dataset}
\begin{figure}[t]
	\centering
	\includegraphics[width=0.67\columnwidth]{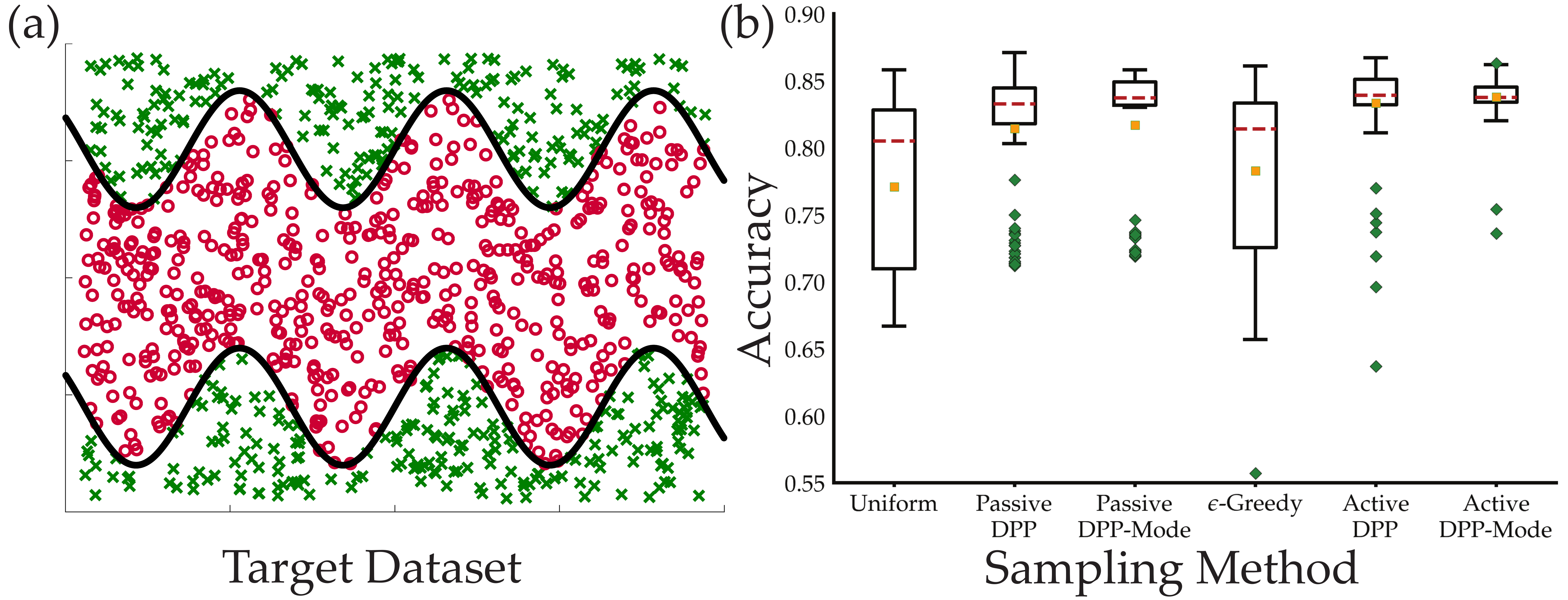}
	\caption{\textbf{(a)} The synthetic dataset. Red (circle) and green (cross) points are the members of two different classes. \textbf{(b)} Accuracies obtained by $100$ runs on synthetic dataset are shown.}
	\label{fig:synthetic_datasets}
	\vspace{-15px}
\end{figure}

We created a synthetic dataset with $C=2$, $d=2$ and $N=1000$ to quantitatively evaluate our methods. The dataset consists of points each of which is labeled depending on whether it is between two sine functions or not. Fig.~\ref{fig:synthetic_datasets}(a) visualizes this dataset. We also created a $1000$-sample dataset $(\mathcal{X}_{\textrm{test}},\mathcal{Y}_{\textrm{test}})$ from the same distribution for accuracy assessments. Note that using the remaining $850$ samples of $\mathcal{X}$ for assessment would be unfair, as active methods are likely to leave easier samples out.

We used a small classifier model to avoid overfitting: a feed-forward one-hidden layer neural network. The hidden layer has $4$ nodes. We used the sigmoid function as the nonlinear activation.

We then performed $100$ runs of each method on $\mathcal{X}$. The results are shown in Fig.~\ref{fig:synthetic_datasets}(b). As it can be seen, Passive DPP significantly outperforms uniform sampling ($p\!<\!0.005$, two-sample t-test), because it reduces the correlation between the selected samples. Passive DPP-Mode does not improve much in terms of performance even though it also significantly outperforms uniform sampling ($p\!<\!0.005$). However, it gives comparable performance in a more time-efficient way. $\epsilon$-Greedy performs poorly by selecting redundant samples (see Fig.~\ref{fig:labeled_samples}) and is significantly outperformed by Passive DPP and its greedy variant ($p\!<\!0.005$). Active DPP outperforms all aforementioned methods significantly ($p\!<\!0.005$) by enforcing both diversity and informativeness. Furthermore, Active DPP-Mode gives very similar accuracy results to Active DPP with lower computation time, as explained before.

We also provide the representative visuals of the selected samples for uniform sampling, Passive DPP, $\epsilon$-Greedy, and Active DPP in Fig.~\ref{fig:labeled_samples}. It can be seen that Passive DPP homogeneously cover the space of samples. While $\epsilon$-Greedy and Active DPP capture more samples near decision boundaries; unlike $\epsilon$-Greedy, Active DPP  selects samples that are not very close to each other.

\vspace{-5px}
\subsection{Classification on Real Datasets}
\vspace{-5px}
\label{sec:real_dataset}
We conducted experiments on MNIST dataset \cite{lecun1998gradient} for hand-written digit classification and Fashion-MNIST \cite{xiao2017fashion} for clothing classification (both $C=10$, $d=784$). Similar to \cite{zhang2019active}, we took a subset of the training data for faster computation. Specifically, our $\mathcal{X}$ and $\mathcal{X}_\textrm{test}$ consist of $5000$ and $10000$ random samples, respectively \textemdash so they can be imbalanced. To avoid overfitting and to show the importance of diversity, we used an autoencoder to reduce the number of dimensions to $d'=5$, separately for both datasets. Our autoencoder had two ReLU hidden layers with $128$ and $64$ nodes, respectively. We call these datasets ``Compressed MNIST" and ``Compressed Fashion-MNIST".

We also downloaded $4$ other datasets from OpenML\footnote{Retrieved April 2019, \url{https://www.openml.org/}} \cite{vanschoren2013openml}: Wall Robot Navigation (WRN, \cite{freire2009short,dua2019uci}), Image Segmentation (Segment, \cite{dua2019uci}), Morphological \cite{jain2000statistical,dua2019uci}, and Blood Transfusion Service Center (Blood, \cite{yeh2009knowledge}). We divided these datasets into two halves as training and test sets. We normalize the datasets prior to training.

We compared our method Active DPP-Mode, which performed significantly better than the other baselines, with the following established methods \footnote{See \url{https://github.com/google/active-learning}.}:
\begin{itemize}[nosep]
	\item Greedy variant of Sener'18 \cite{sener2018active}, which tries to minimize the upper bound on the loss function of learning by posing the problem as a core-set selection problem,
	\item Graph density method of Ebert'12 \cite{ebert2012ralf}, which attempts to choose the most representative samples,
	\item Hsu'15 \cite{hsu2015active} which attempts to select the right batch active learning method as a bandit problem among Sener'18, Ebert'12, and uniform sampling,
	\item Dasgupta'08 \cite{dasgupta2008hierarchical}, which performs hierarchical clustering over the unlabeled data and exploits the information in the pruned clustering tree to select the batches, and
	\item Xu'03 \cite{xu2003representative}, which clusters the unlabeled data and selects the medoids for labeling.
\end{itemize}

We performed $100$ runs of each method on $\mathcal{X}$ where the classifier models are feed-forward neural networks with sigmoid nonlinearities, whose structures are described in the Appendix. Fig.~\ref{fig:real_results} shows the results. As in many other works \cite{ebert2012ralf,hsu2015active,zhdanov2019diverse}, we empirically observed no algorithm is superior in all datasets. However, our Active Greedy-DPP is one of the best methods with statistical significance on $4$ of the $6$ datasets. It also gave pretty competitive accuracy values on the other two.

Our active DPP-based methods can be further improved by tuning $\epsilon$ through some heuristics or expertise about the dataset of interest. The low performance of the graph density method of Ebert'12 might be because it focuses only on finding the set of most representative samples of the dataset. The performance of Hsu'15 implies combining Ebert'12 with the methods using uncertainty can improve its performance. Lastly, although Sener'18 proposed a general method, it was mainly developed for convolutional neural networks, and it can perform better when high-quality features are available.

\begin{figure}[t]
	\centering
	\includegraphics[width=\columnwidth]{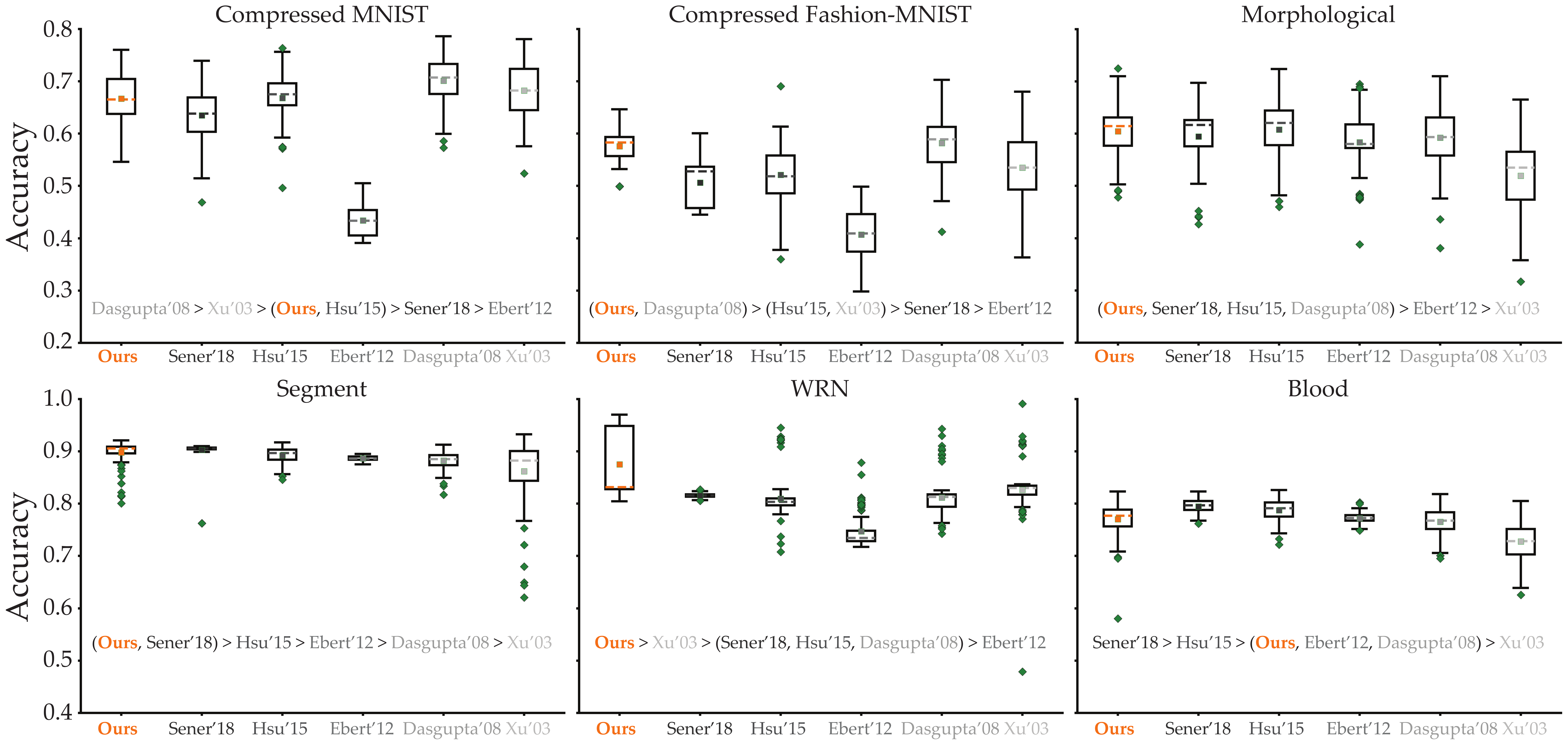}
	\vspace{-10px}
	\caption{Accuracies obtained by $100$ runs on real datasets with our Active DPP-Mode and other baselines are shown. Significant performance differences ($p<0.05$) are noted. Tuples with parentheses indicate the differences between the methods are insignificant.}
	\label{fig:real_results}
	\vspace{-15px}
\end{figure}

\vspace{-5px}
\section{Discussion}
\vspace{-5px}
\label{sec:discussion}
\textbf{Summary.} In this work, we proposed a batch-mode active learning method and demonstrated our results on classification tasks with synthetic and several real-world datasets. We emphasize our proposed framework is more general. For example, we can use our framework to fit a Gaussian Process through actively selected batches, or for reward learning as we demonstrate in the Appendix.

\textbf{Limitations.} One limitation of our approach is computational cost. Although MCMC sampling and performing only rank-one updates significantly accelerate the process, sampling still poses a bottleneck when $N$ is large. To overcome this issue, one might consider using the dual representation of DPPs with careful design of similarity features \cite{kulesza2012determinantal}, and/or the idea of random projections.

\textbf{Future Work.} Some future directions we currently consider involve the use of DPPs for regression and for generative models, which can be quite useful in the cases where data generation is costly, such as medical imaging. Other DPP-related research direction we are considering is diversity based learning where DPPs can be employed as part of the loss functions. This can be useful in natural language processing, such as for text summarization, or in video understanding.

\textbf{Conclusion.} In this paper, we proposed a DPP-based batch active learning method which can be used in a wide variety of domains where data labeling is costly. While our method is very general and applicable to different types of problems, we demonstrated a few use cases with classification tasks.

We believe due to their natural property of providing diversity, DPPs can be used for other problems in machine learning. We hope our work stimulates interest and leads to various applications of DPPs.


\printbibliography

\section{Appendix}
\label{sec:appendix}
\subsection{Details of the Maximum Coordinate Rounding Algorithm}
\label{sec:details_convex_relax}

\textbf{Proof of the Approximation Ratio}

Here we provide the proof of Theorem 1, showing that our rounding algorithm achieves the approximation ratio of $e^k$.

\begin{proof}[Proof of Theorem 1]
	We prove this by induction on $k$. We simply prove that each time we select an element and put it in $A$, we only lose a factor of at most $e$. Note that the first-order optimality condition of $v_1^*,\dots,v_N^*$ means that
	\[ \nabla \log g(v_1^*,\dots,v_N^*)=c\1-\sum_{j:v_j^*=0}c_je_j,  \]
	for some $c$ and collection of $c_j\geq 0$. Here $\1$ is the all-ones vector and $e_1,\dots,e_N$ are the standard basis vectors. By complementary slackness, we have $c_jv_j^*=0$ for all $j$. Since $v_i^*>0$, it must be that $c_i=0$, and it follows that $c=\norm{\nabla \log g(v_1^*,\dots,v_N^*)}_\infty=\partial_i \log g(v_1^*,\dots,v_N^*)$.
	Note that $g$ is a $k$-homogeneous polynomial and it follows that $\dotp{\nabla g(v), v}=kg(v)$. Applying the inequality $\dotp{\nabla g, v}\leq \norm{\nabla g}_\infty\cdot \norm{v}_1$, we get
	\[ kg(v_1^*,\dots,v_N^*)\leq \norm{\nabla g(v_1^*,\dots,v_N^*)}_\infty\cdot \norm{v^*}_1,  \]
	Noting that $\norm{v^*}_1=k$ and $\norm{\nabla g(v_1^*,\dots,v_N^*)}_\infty=\partial_i g(v_1^*,\dots,v_N^*)$, we get
	\[ \partial_i g(v_1^*,\dots,v_N^*)\geq g(v_1^*,\dots,v_N^*). \]
	But note that $\partial_i g$ is exactly the generating polynomial for the conditioned DPP (where we condition on $i\in A$). So it is enough to show that $\max \partial_i g(u_1,\dots,u_N)$ over $u_1+\dots+u_N=k-1$ is at least $1/e$ times the above amount. To do this we simply let $u_j^*=(k-1)v_j^*/(k-v_i^*)$ for $j\neq i$ and we set $u_i^*=0$. Since $\partial_i g$ is $(k-1)$-homogeneous we get
	\begin{align*}
	\partial_i g(u_1^*,\dots,u_N^*)=\left(\frac{k-1}{k-v_j}\right)^{k-1}\partial_i g(v_1^*,\dots,v_N^*)\geq
	\left(\frac{k-1}{k}\right)^{k-1}g(v_1^*,\dots,v_N^*).
	\end{align*}
	We conclude by noting that $((k-1)/k)^{k-1}\geq 1/e$.
\end{proof}

\textbf{Stochastic Mirror Descent Algorithm}

In this section we propose a stochastic mirror descent algorithm to optimize the following convex program over nonnegative reals
\[ \max\left\{\log g(v_1,\dots,v_N)\;\vert\; v_1+\dots+v_N=k \right\}, \]
where $g$ is the generating polynomial associated to a $k$-DPP, i.e.,
\[ g(v_1,\dots,v_n)=\sum_{A:\abs{A}=k}\det(L_A)\prod_{i\in A} v_i. \]

Our proposed algorithm is repetitions of the following iteration:
\begin{enumerate}[nosep]
	\item Sample a set $A$ with $P(A)\propto \prod_{i\in A} v_i \det(L_A)$.
	\item Let $u\leftarrow v+\eta \1_A$, where $\1_A$ is the indicator of $A$.
	\item Let $v\leftarrow ku/(\sum_i u_i)$.
\end{enumerate}

Note that the sampling in step 1 can be done by MCMC methods, since we are sampling $A$ according to a DPP. Careful implementations of the latest MCMC methods (e.g. \cite{hermon2019modified}) run in time $O(N\cdot k^2\log k)$ time. The parameter $\eta$ is the step size and can be adjusted.

Now we provide the intuition behind this iterative procedure. First, let us compute $\nabla \log g$. We have
\[ \frac{\partial_i g(v_1,\dots,v_N)}{g(v_1,\dots,v_N)}=\frac{1}{v_i}\frac{\sum_{A:i\in A}\det(L_A)\prod_{j\in A} v_j}{\sum_{A}\det(L_A)\prod_{j\in A}v_j}, \]
but this is equal to $P(i\in A)/v_i$. Therefore $\nabla \log g=\diag(v)^{-1}p$, where $p$ is the vector of marginal probabilities, i.e. $p_i=P(i\in A)$. Note however that $E[\1_A]=p$. So this suggests that we can use $\diag(v)^{-1}\1_A$ as a stochastic gradient.

Numerically we found $\diag(v)^{-1}\1_A$ to be unstable. This is not surprising as $v$ can have small entries, resulting in a blow up of this vector. Instead we use a stochastic mirror descent algorithm, where we choose a convex function $\phi$ and modify our stochastic gradient vector by multiplying $(\nabla^2 \phi)^{-1}$ on the left.

We found the choice of $\phi(v_1,\dots,v_N)=\sum_i v_i \log v_i$ to be reasonable. Accordingly, we have $\nabla^2 \phi =\diag(v)^{-1}$, and therefore
\[ (\nabla^2 \phi)^{-1}\diag(v)^{-1}\1_A=\1_A. \]
Finally, note that step 3 of our algorithm is simply a projection back to the feasible set of our constraints (according to the Bregman divergence imposed by $\phi$).

\textbf{Choice of Stochastic Gradient Vector}

Note that the vector $\1_A$ in step 2 of the algorithm can be replaced by any other random vector $X$, as long as the expectation is preserved. One can extract such vectors $X$ from implementations of MCMC methods \cite{hermon2019modified,ALOV18a}. The MCMC methods that aim to sample a set $A$ with probability proportional to $\prod_{i\in A} v_i \det(L_A)$ work as follows: starting with a set $A$, one drops an element $i\in A$ chosen uniformly at random, and adds an element $j$ back with probability proportional to $\det(L_{A-i+j})$, in order to complete one step of the Markov chain. We can implement the same Markov chain, and let $X_j$ be $k$ times the probability of transitioning from $A-i$ to $A-i+j$ in this chain. It is easy to see that if the chain has mixed and $A$ is sampled from the stationary distribution
\[ \mathbb{E}[X]=\mathbb{E}[\1_A]. \]
We found this choice of $X$ to have less variance than $\1_A$ in practice.

\textbf{Empirical Comparison with Greedy Algorithm}

Here we provide an empirical comparison between the performance of the greedy algorithm versus our maximum coordinate rounding algorithm.

We used two sets of experiments. In the first, we generated $200$ random points inside $[0,1]^2$, and used a Gaussian kernel with parameter $\sigma=1$ and attempted to find the mode of the $k$-DPP for $k=3$. In the second, we generated $200$ random points inside $[0,1]^2$ and used a Gaussian kernel with parameter $\sigma=0.2$ and attempted to find the mode of the $k$-DPP for $k=20$. We ran each experiment $100$ times (each time generating a new set of random points).

The results can be seen in Fig.~\ref{fig:convex-comparison}. We plotted $\det(L_A)$ vs.\ $\det(L_B)$, where $A$ is the set returned by the greedy algorithm, and $B$ is the set returned by the maximum coordinate rounding algorithm.

\begin{figure}
	\centering
	\begin{subfigure}{.5\textwidth}
		\centering
		\includegraphics[width=\linewidth]{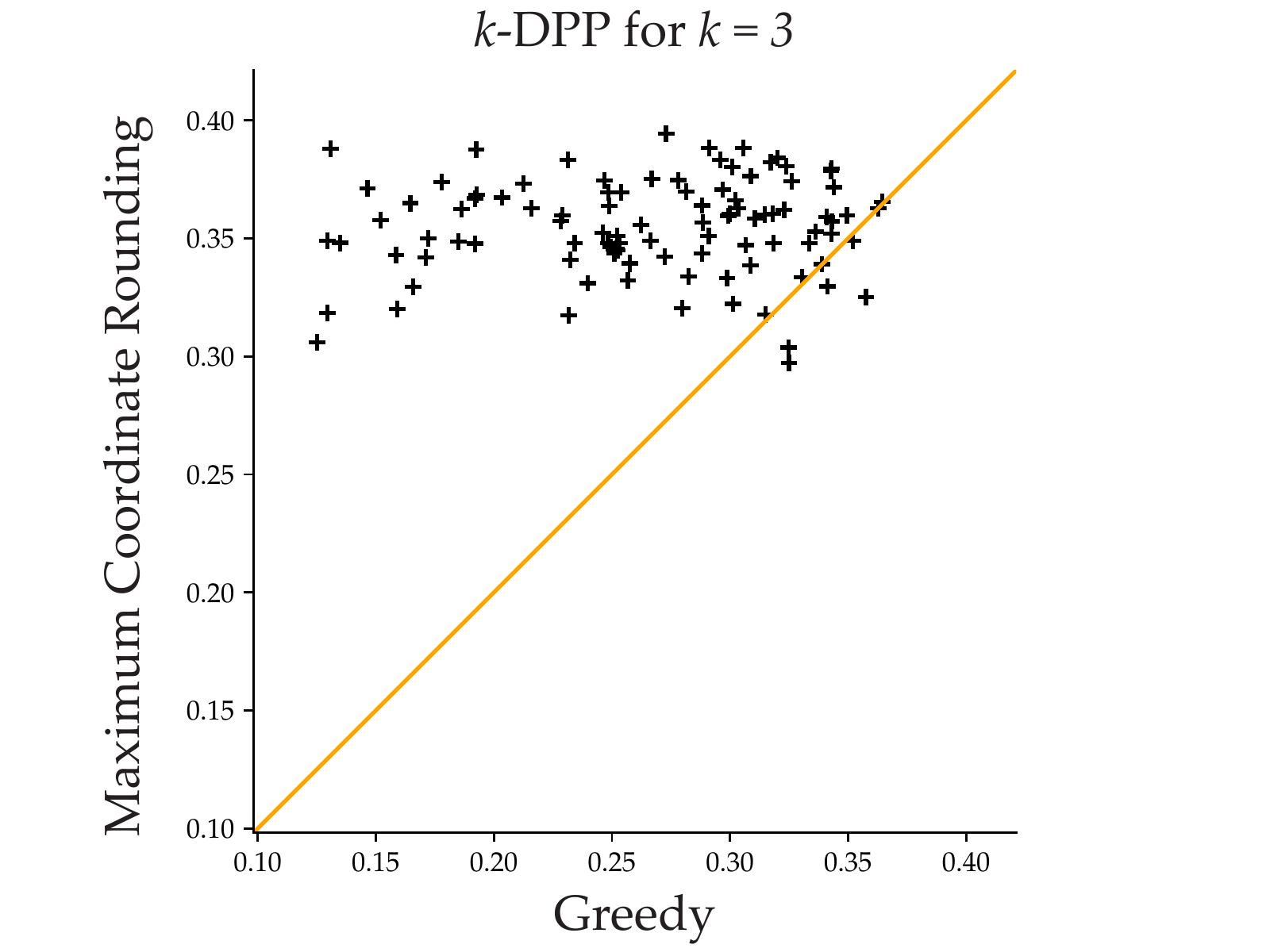}
	\end{subfigure}%
	\begin{subfigure}{.5\textwidth}
		\centering
		\includegraphics[width=\linewidth]{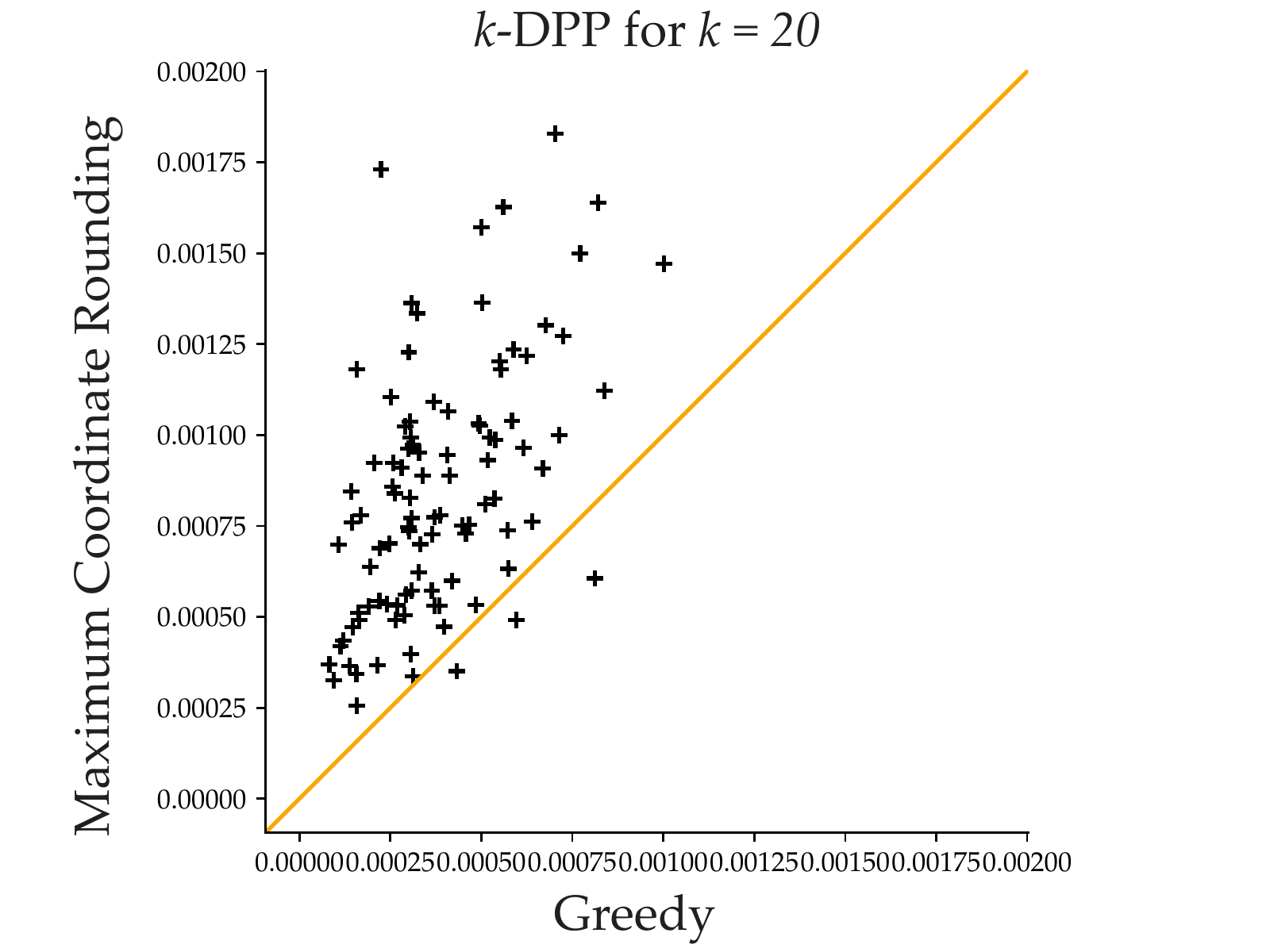}
	\end{subfigure}
	\caption{Comparison of the greedy algorithm and the maximum coordinate rounding algorithm. In 93\% of the $k\!=\!3$ cases, and in 97\% of the $k\!=\!20$ cases, our method returns a better or equal solution.}
	\label{fig:convex-comparison}
	\vspace{-10px}
\end{figure}

\subsection{Preference-based Reward Learning}
Reward learning is a vital problem in reinforcement learning and has many applications specifically in robotics. Recently, \cite{sadigh2017active} showed it is possible for a robot to learn the operator human's reward function by querying him/her with pairwise trajectory comparisons in the form of: "Which trajectory of the robot do you prefer?". They also proposed an active learning approach by formulating the problem as maximum volume removal optimization. Later, \cite{christiano2017deep} took a similar approach to learn reward functions for reinforcement learning. Most recently, \cite{biyik2018batch} showed it is possible to accelerate the optimization through batch-active learning. They proposed several efficient methods and pointed out the similarity between their ``successive elimination" method and Mat\'ern point processes \cite{matern2013spatial}, which is another repulsive point process. Therefore, we hypothesize that DPPs are a good fit for this problem.

We first briefly summarize the problem. Humans have different preferences over how robots should operate. For example, one can want his/her autonomous car to drive aggressively, whereas another person prefers much safer and defensive driving. These preferences are encoded as a reward function $R$ over trajectory features. These features can be, for example, the speed of the car, the distance to other cars, the heading angle, etc. The main assumption both \cite{sadigh2017active} and \cite{biyik2018batch} had is that the reward is linear in features:
\begin{align*}
R(\phi(\zeta)) = \omega^T\phi(\zeta)
\end{align*}
where $\zeta$ is a trajectory and $\phi(\zeta)$ are its features. The purpose is to efficiently learn $\omega$. By assuming humans follow a softmax noise model \cite{holladay2016active}, they perform Bayesian inference and sample $\omega$. Those samples are then used to actively generate new queries. For that, they formulated the problem as maximum value removal optimization.

Our idea is to use the value of optimization, i.e. expected volume removal amount, as the score associated with the corresponding query. This replaces the uncertainty measure in the classification tasks. As in \cite{biyik2018batch}, we use Euclidean distance between query features (the difference of trajectory features in the query) for diversity, again with a Gaussian kernel.

In \cite{biyik2018batch}, they also employed the heuristic that they can first greedily preselect the most informative $20k$ queries from all $500,\!000$ queries in the dataset, and then select $k$ of them with the methods they propose. For fair comparison and fast computation, we employed the same idea and replaced their method with Active DPP-Mode.

\setcounter{theorem}{1}
\begin{theorem}
	Under the same assumptions as Theorem 3.1 of \cite{biyik2018batch}, the batch query selection with Active DPP-Mode will remove at least $1-\delta$ times as much volume as removed by the best adaptive strategy after $k\ln(1/\delta)$ times as many queries.
\end{theorem}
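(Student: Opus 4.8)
The plan is to carry out the same coverage-style (adaptive-submodular) argument that underlies Theorem 3.1 of \cite{biyik2018batch}, replacing the one-iteration progress guarantee of their greedy batch rule by one for Active DPP-Mode. Write $f$ for the (expected) volume-removal objective on sets of queries; the assumptions of \cite{biyik2018batch} ensure that $f$ is monotone and adaptively submodular, and that the score $q_i$ with which we build the uncertainty DPP kernel in Eq.~\eqref{eq:gamma} is exactly the expected volume removed by query $i$ alone, $q_i=\mathbb{E}[f(\{i\})]$. Fix the best adaptive strategy $\pi^*$ that asks $n$ queries and let $V^*$ be the expected volume it removes. I want to show that Active DPP-Mode run for $m:=n\ln(1/\delta)$ batches — i.e.\ $km=k\ln(1/\delta)\cdot n$ queries — removes at least $(1-\delta)V^*$ in expectation.

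The key step is a single-batch progress lemma: from any realized history, if $A$ is the batch returned by Active DPP-Mode and $B^*$ is the best size-$k$ batch available in the current state, then $\mathbb{E}[f(A)]\ge\tfrac1k\,\mathbb{E}[f(B^*)]$ (both conditioned on the history). To get this, note that the greedy mode-approximation we use selects as its very first element $i_{\max}=\argmax_i q_i$, because the first greedy step maximizes $\norm{D_i}$, which is monotone increasing in $q_i$; the remaining $\epsilon k$ exploration queries only enlarge the batch and hence, by monotonicity, cannot decrease $f$. Therefore $\mathbb{E}[f(A)]\ge\mathbb{E}[f(\{i_{\max}\})]=\max_i q_i$. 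On the other hand, adaptive submodularity gives $\mathbb{E}[f(B^*)]\le\sum_{i\in B^*}\mathbb{E}[f(\{i\})]=\sum_{i\in B^*}q_i\le k\max_i q_i$, and dividing yields the lemma. (For the exact DPP mode the same conclusion holds once $\gamma$ is large enough that the factor $\prod_{i\in A}q_i^{2\gamma/\alpha}$ dominates $\det S_A$.)

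Next I would relate $\mathbb{E}[f(B^*)]$ to the volume $\pi^*$ can still remove. Group the $n$ queries of $\pi^*$ into $\lceil n/k\rceil$ consecutive size-$k$ batches; adaptive submodularity together with monotonicity then gives, at every state reached after $t-1$ Active DPP-Mode batches, the estimate $V^*-R_{t-1}\le\lceil n/k\rceil\cdot\mathbb{E}[f(B^*)]$, where $R_{t-1}$ is the expected volume Active DPP-Mode has removed so far. This is the familiar ``a length-$n$ optimal policy beats the current policy by at most $\lceil n/k\rceil$ best-size-$k$-batch gains'' bound from the adaptive-submodular toolkit, stated at batch granularity. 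Combining it with the key lemma, the expected gain $\Delta_t$ of the $t$-th Active DPP-Mode batch satisfies $\Delta_t\ge\tfrac1k\mathbb{E}[f(B^*)]\ge\tfrac1n(V^*-R_{t-1})$ (up to the harmless rounding in $\lceil n/k\rceil$), so $V^*-R_t\le(1-\tfrac1n)(V^*-R_{t-1})$. Iterating from $R_0=0$, $V^*-R_m\le(1-\tfrac1n)^m V^*\le e^{-m/n}V^*=\delta V^*$, i.e.\ $R_m\ge(1-\delta)V^*$ after $m=n\ln(1/\delta)$ batches, which is $k\ln(1/\delta)$ times as many queries as $\pi^*$ uses.

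The main obstacle is the third step: making the inequality $V^*-R_{t-1}\le\lceil n/k\rceil\,\mathbb{E}[f(B^*)]$ rigorous for a stochastic, adaptively chosen volume-removal objective requires the full adaptive-submodularity machinery (and the regularity conditions) imported from \cite{biyik2018batch}, and it is the batch granularity there that produces the $\lceil n/k\rceil$ — hence the extra factor $k$ — rather than a factor $n$. The only genuinely new ingredient relative to \cite{biyik2018batch} is the single-batch lemma, where we pay an extra $1/k$ for taking the (greedy) DPP mode instead of a fully greedy within-batch construction; this $1/k$ is exactly what turns their query count into our $k\ln(1/\delta)$ one.
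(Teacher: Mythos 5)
Your proposal is correct and rests on the same key observation as the paper's proof: the greedy mode approximation places the highest-score (individually most informative) query first in every batch, and monotonicity plus the adaptive-submodularity guarantee of \cite{biyik2018batch} then give the $k\ln(1/\delta)$ bound. The paper simply cites Theorem 3.1 of \cite{biyik2018batch} for the coverage machinery you re-derive; your $1/k$ single-batch lemma combined with the $\lceil n/k\rceil$ grouping is a slightly roundabout but equivalent way of obtaining the per-batch gain of at least $\tfrac{1}{n}\left(V^*-R_{t-1}\right)$ that follows directly from comparing the batch to a single greedy query.
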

\begin{proof}
	It was shown in \cite{biyik2018batch} that if a batch-mode active learning method queries the individually most informative pairwise comparison in each batch, then the given performance guarantee will hold due to submodularity \cite{krause2014submodular}. The proof is then complete when we note Active DPP-Mode starts each iteration by adding the query that will remove the most volume in expectation into the batch.
\end{proof}

We quantitatively evaluated our method in comparison with \cite{biyik2018batch}'s methods (greedy, medoids, boundary medoids, successive elimination) for four different environments. First, inspired by their experiments, we simulated a simple linear dynamical system (LDS) with $6$ states, whose means are directly features, and $3$-dimensional control inputs. Second and thirdly, we used MuJoCo physics engine \cite{todorov2012mujoco} to simulate a Fetch mobile manipulator robot \cite{wise2016fetch} where the task is to reach an object on the table without hitting an obstacle as in \cite{palan2019learning}, and a tosser robot where the task is to toss an object into one of two baskets based on preferences. Lastly, we used a driving simulator \cite{sadigh2016planning} to learn different driving preferences. We used the same features as \cite{biyik2018batch} on \emph{Driver} and \emph{Tosser} environments. We used the speed of the manipulator; distance to the table, to the goal object, and to the obstacle as $4$ features in \emph{Fetch} environment. Visuals from the environments are shown in Fig.~\ref{fig:env_visuals}.
\begin{figure}[h]
	\centering
	\vspace{-15px}
	\includegraphics[width=0.6\columnwidth]{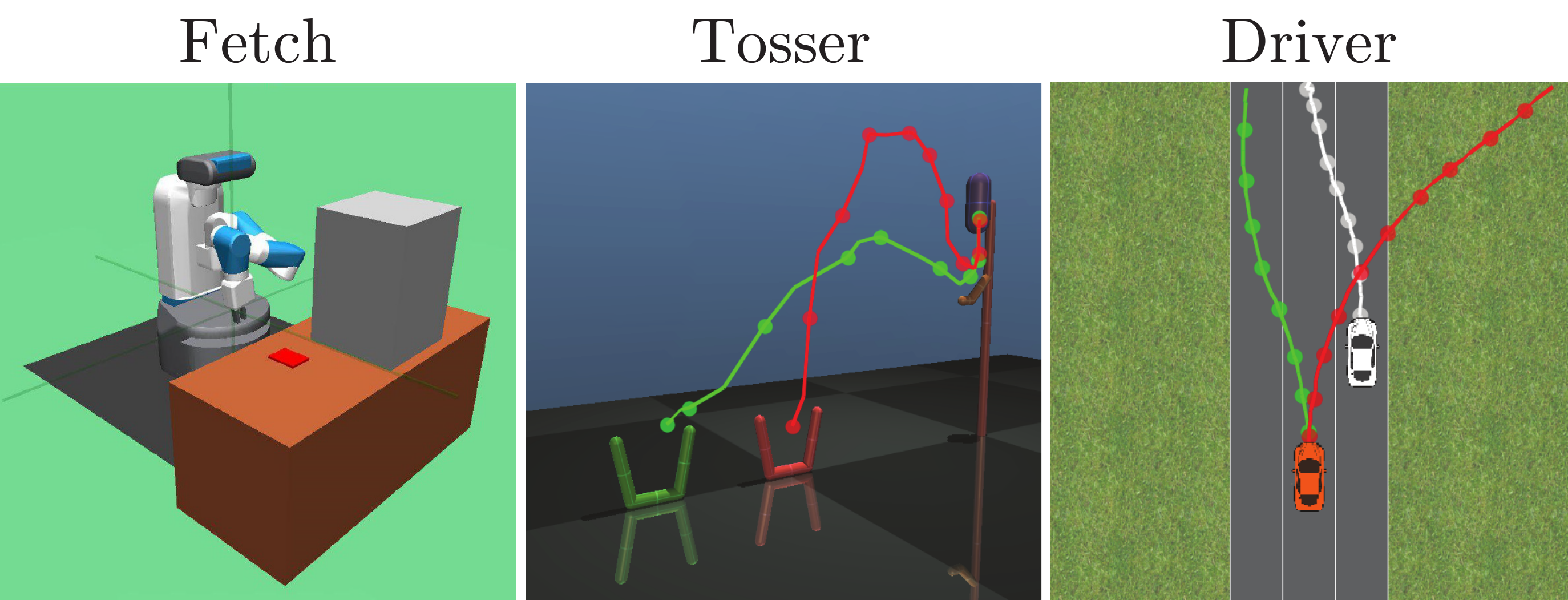}
	\caption{Views from the environments are shown.}
	\vspace{-15px}
	\label{fig:env_visuals}
\end{figure}

For our experiments, we took $k=10$. We also created a new dataset of $100,\!000$ queries for each environment. We randomly generated $200$ different reward functions ($\omega$-vectors), $100$ of which are for tuning $\gamma$ and the remaining $100$ are for tests. This is again for each environment. The same approach can be employed in practice: One can simulate random reward functions for tuning and then deploy the system to learn the reward functions from real users. For both tuning and tests, we simulated noiseless users in order to eliminate the effect of noise in the results. The same approach was taken in \cite{biyik2018batch}. The tuning yielded $\gamma=1$ for LDS and Tosser, $\gamma=4$ for Fetch, and $\gamma=0$ for Driver environments. The details of tuning can be found in Section~\ref{subsec:models_and_tuning} of the Appendix.

\begin{wrapfigure}{r}{0.6\textwidth}
	\centering
	\vspace{-15px}
	\includegraphics[width=0.6\columnwidth]{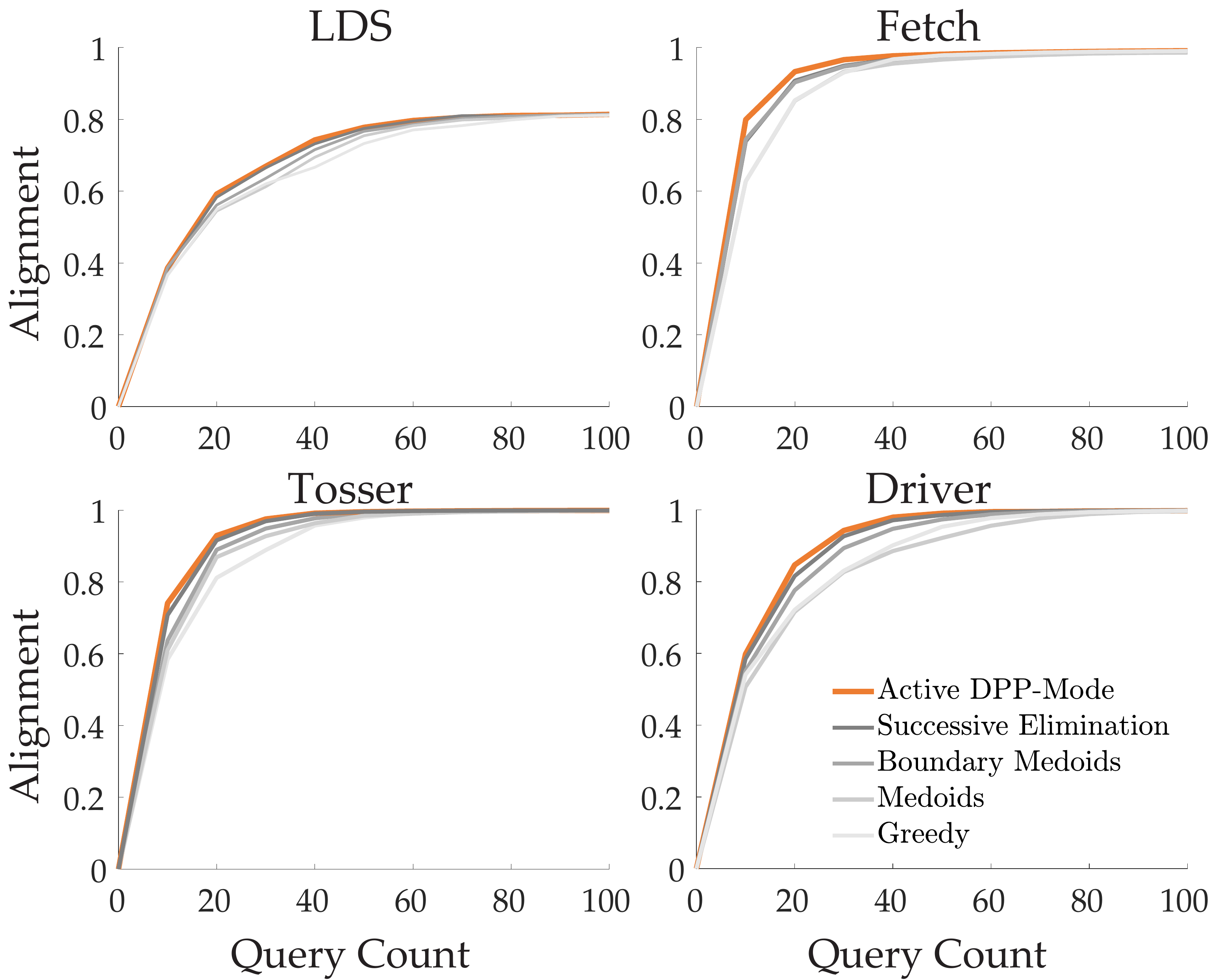}
	\caption{The results of the reward learning task are shown.}
	\vspace{-10px}
	\label{fig:rew_results}
\end{wrapfigure}
We demonstrate the results in Fig.~\ref{fig:rew_results}. The alignment metric is defined as $\omega^T\hat\omega$ where $\hat\omega$ are the estimated weights, and the weights are normalized such that $\norm{\omega}_2=\norm{\hat\omega}_2=1$.

Since the reward functions are paired between the methods, we used Wilcoxon signed-rank tests over the alignment values for significance testing after $30$ queries. While our results confirm the findings of \cite{biyik2018batch} in that successive elimination method outperforms their other alternatives, we observed that Active DPP-Mode significantly outperformed all the methods in all environments ($p<0.05$) except for successive elimination in LDS where both algorithms perform comparably.

\subsection{Models and Tuning}
\label{subsec:models_and_tuning}

\textbf{Hyperparameter Tuning}

We introduced $\sigma$, $\alpha$, $\gamma$, and $\epsilon$ for DPP-based methods. While $\gamma$ and $\epsilon$ are important only for active methods, $\sigma$ and $\alpha$ plays a role in both passive and active techniques. On the other hand, mode-variants eliminate $\alpha$, as it does not affect the results unless trivially $\alpha=0$.

As $\alpha$ and $\gamma$ are enough to adjust the trade-off between diversity and informativeness, we simply set $\sigma$ to be the expected distance between two nearest neighbors when $k$ samples are selected uniformly at random in the space $[0,1]^d$ where $d$ is the number of features of the data samples.

Given an unlabeled dataset $\mathcal{X}$, we cannot try different hyperparameter values, because once we get the labels, we already spend our budget $K$.
To perform tuning for $\alpha$ and $\gamma$, we generate \emph{fake labels} $\mathcal{Y}'$, and tune the hyperparameters on $(\mathcal{X},\mathcal{Y}')$. The procedure for fake label generation can rely on some heuristics or domain expertise.

We visualize the tuning set $(\mathcal{X},\mathcal{Y}')$ we created for the experiments we made with synthetic dataset in Fig.~\ref{fig:synthetic_datasets_tuning}. Note that $\mathcal{Y}$ and $\mathcal{Y}'$ make different number of assignments for each class, and the decision boundaries are completely different, as we do not know such properties of $\mathcal{Y}$ in practice. Using the fake labels, we tune $\alpha$ and $\gamma$ for DPP-based methods, whose results are presented below. 

\begin{wrapfigure}{r}{0.4\textwidth}
	\centering
	\vspace{-15px}
	\includegraphics[width=0.4\columnwidth]{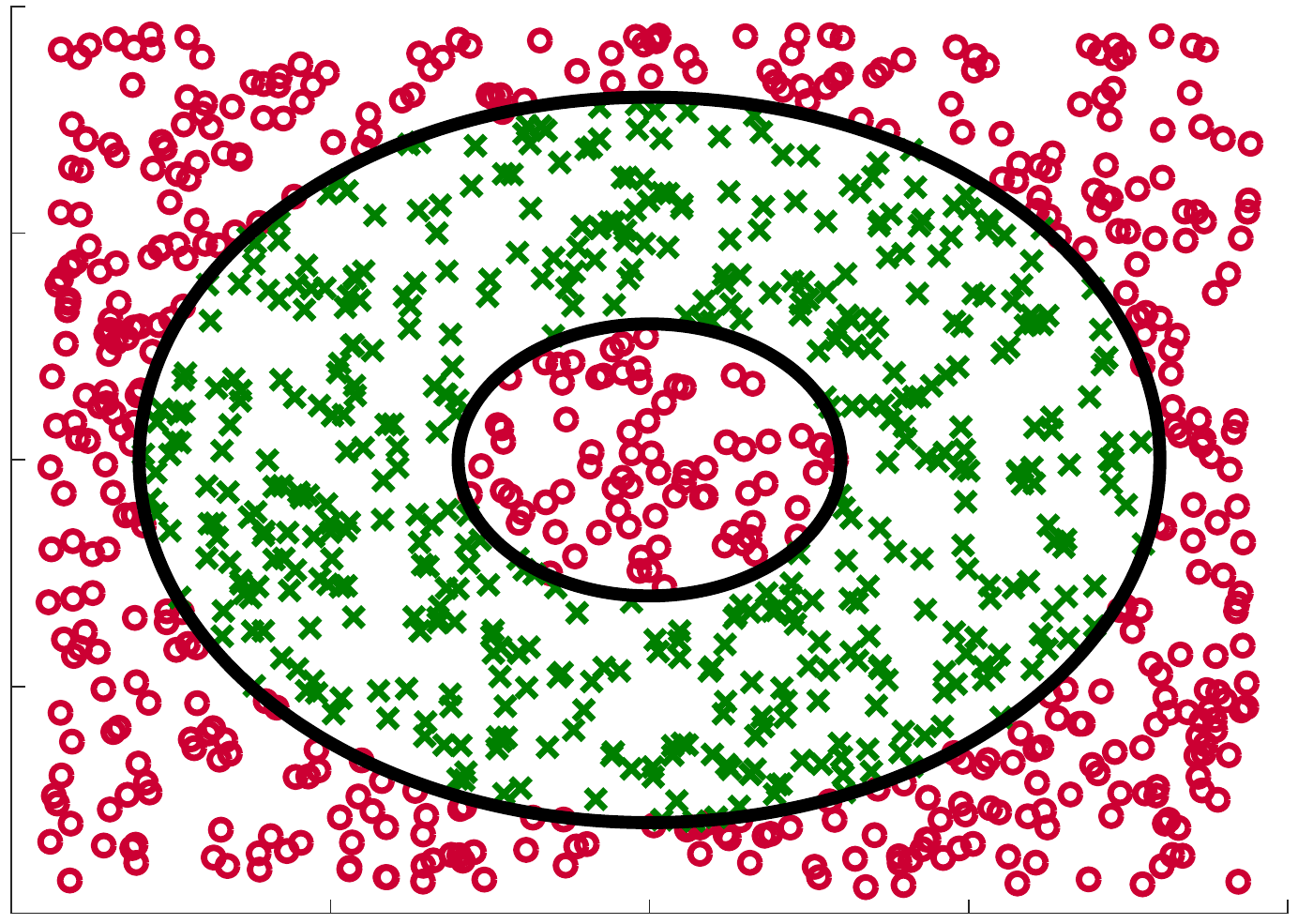}
	\vspace{-12px}
	\caption{Tuning dataset $(\mathcal{X},\mathcal{Y}')$ we used for the classification task on synthetic dataset.}
	\label{fig:synthetic_datasets_tuning}
	\vspace{-22px}
\end{wrapfigure}
For hyperparameter tuning in classification tasks on real datasets, we created $\mathcal{Y}'$ as follows. We selected $C$ samples of the dataset uniformly at random, and called them centroids of distinct classes. We then assigned each of $N$ samples to the class with respect to their closest centroid. We repeated this procedure until all classes have at least $2N/3C$ samples. We provide tuning results below. 

While $\epsilon$ is another parameter that can be optimized using the same technique, we directly use a fixed value of $\epsilon=1/3$. This is because the effect of $\epsilon$ depends on the dispersion of decision boundaries. However, we do not have such information, because the true labels $\mathcal{Y}$ is unknown.

\textbf{Classification on Synthetic Dataset}

For the classifier model, we used a feed-forward one-hidden layer neural network. The hidden layer had $4$ nodes. We used sigmoid function as the nonlinear activation.

Using fake labels, we independently tuned $\alpha$ and $\gamma$ for Active DPP and Passive DPP as described below. We used the same set of hyperparameters for the DPP-mode variants. 

\textbf{\textit{Passive DPP.}} We ran the Passive DPP algorithm on the synthetic dataset $(\mathcal{X},\mathcal{Y}')$ with varying $\alpha$ values in the range $[0,7]$. Due to the fast mixing properties of Monte Carlo Markov Chain (MCMC) for $0\leq\alpha\leq1$, we took $\alpha$'s with $0.1$ step sizes in $[0,1]$, and for the rest of the interval we adopted a step size of $0.5$. We ran the algorithm with each $\alpha$ for $100$ times.

Passive DPP is mostly robust to changes in $\alpha$, when $\alpha\geq2$. This is because the resulting set is diverse enough with large $\alpha$. Hence, we used $\alpha=5$ for the training on $\mathcal{X}$ and assessment on $\mathcal{X}_\textrm{test}$ as described in the paper.

\textbf{\textit{Active DPP.}} Similarly, we ran Active DPP algorithm on the synthetic dataset $(\mathcal{X},\mathcal{Y}')$. This method has two hyperparameters that we tune: $\alpha$ and $\gamma$. We first fixed $\gamma$ to be $5$ and executed runs on varying $\alpha$ again in the range of $[0,7]$. We skipped $\alpha=0$, as it does not enforce diversity and suffers from the redundancy issues. We set the step size to be $1$ for $\alpha\geq1$ and $0.2$ for $\alpha<1$ again due to fast mixing properties. In the second set of tuning experiments, we fixed $\alpha=4$ and varied $\gamma$ in $[0,7]$ with a step size of $1$. In both experiments, we ran the algorithm $100$ times for each $\alpha$ and $\gamma$.

Based on these simulations, we used $\alpha=4$ and $\gamma=5$ for training on $\mathcal{X}$ and for assessment on $\mathcal{X}_\textrm{test}$, and presented the results in the paper.



\textbf{Classification on Real Datasets}

After we saw the comparable results of Passive DPP with Passive DPP-Mode, and Active DPP with Active DPP-Mode on the classification task with synthetic dataset; we decided to continue with the mode variants as they are faster and eliminate the parameter $\alpha$. So on the real dataset classification experiments, we only tune $\gamma$ for Active DPP-Mode.

We created the datasets with fake labels $(\mathcal{X},\mathcal{Y}')$ as we describe in the paper. For the classifier model, we used feed-forward neural networks. The list below specifies the network structures for each dataset, which were chosen based on the dataset complexity.
\begin{itemize}[nosep]
	\item \textbf{Compressed MNIST:} Input(5), Hidden(10), Output(10)
	\item \textbf{Compressed Fashion-MNIST:} Input(5), Hidden(10), Output(10)
	\item \textbf{Morphological:} Input(6), Hidden(12), Output(10)
	\item \textbf{Segment:} Input(18), Output(7)
	\item \textbf{WRN:} Input(4), Hidden(8), Output(4)
	\item \textbf{Blood:} Input(4), Hidden(8), Hidden(8), Output(2)
\end{itemize}

As we have observed that DPP methods are mostly very robust to the changes in $\alpha$ and $\gamma$, and results of the synthetic dataset experiment have showed DPP-Mode achieves comparable performance to corresponding DPP methods; we used DPP-Mode variants to tune the hyperparameter $\gamma$. 

\textbf{\textit{Active DPP-Mode}}. We tuned $\gamma$ in the range of $[0,7]$ with step size $1$ using the fake labels for each real dataset. We experimented each $\gamma$ value $100$ times. The results suggest the algorithm is somewhat robust to different values of $\gamma$, too, in the given interval. We used the following $\gamma$ values for the experiments with original datasets, as they gave the highest average accuracies on tuning sets:
\begin{itemize}[nosep]
	\item \textbf{Compressed MNIST:} $\gamma=5$
	\item \textbf{Compressed Fashion-MNIST:} $\gamma=3$
	\item \textbf{Morphological:} $\gamma=3$
	\item \textbf{Segment:} $\gamma=1$
	\item \textbf{WRN:} $\gamma=1$
	\item \textbf{Blood:} $\gamma=2$
\end{itemize}

\textbf{Preference-based Reward Learning}

Again due to aforementioned reasons, we used Active DPP-Mode instead of Active DPP. We have not tried passive methods in this setting, because all the state-of-the-art methods that we compare use active learning techniques.

\begin{wrapfigure}{r}{0.6\textwidth}
	\centering
	\vspace{-25px}
	\includegraphics[width=0.6\columnwidth]{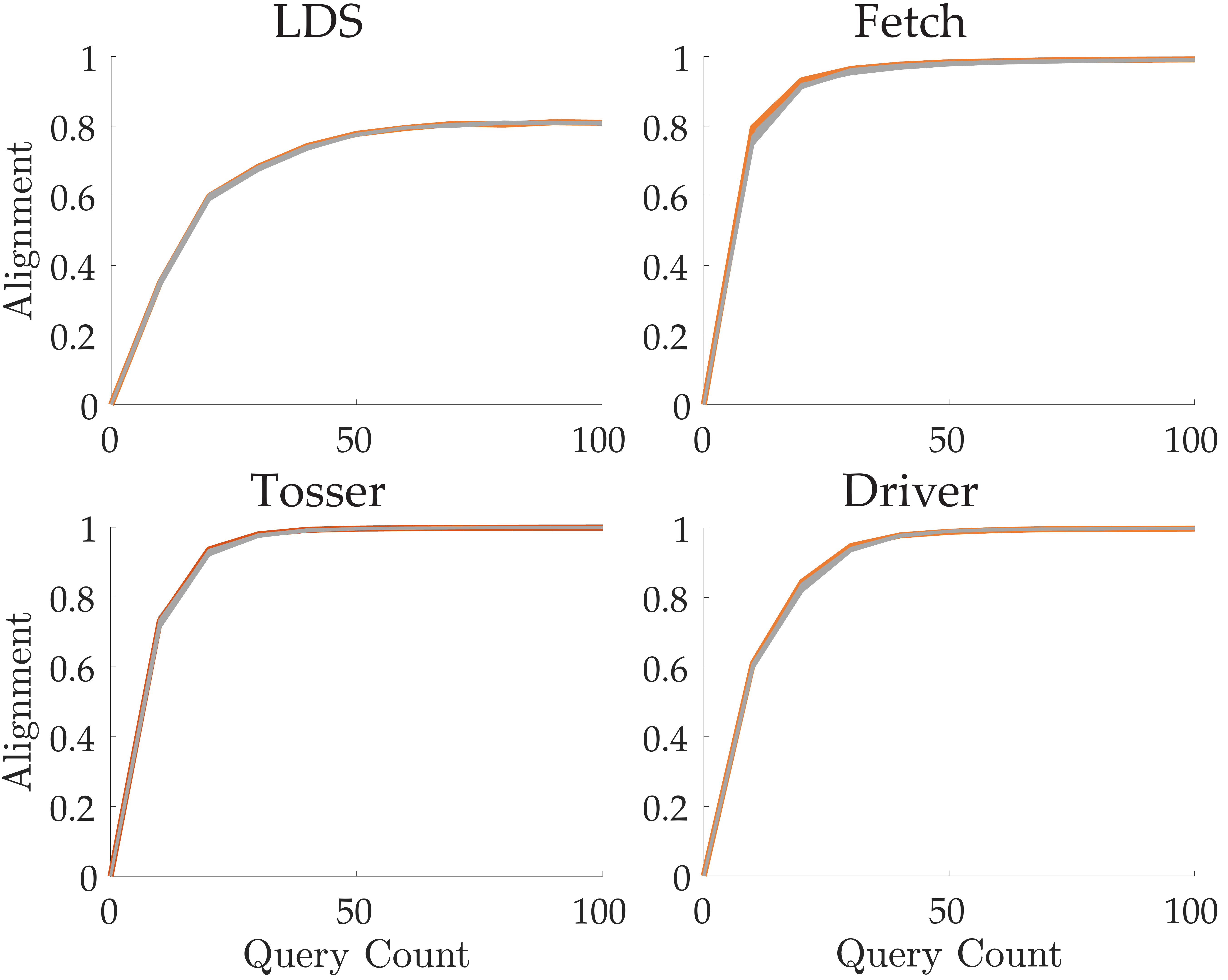}
	\vspace{-15px}
	\caption{Tuning results for Active DPP-Mode for the reward learning tasks.}
	\vspace{-20px}
	\label{fig:reward_learning}
\end{wrapfigure}
\textbf{\textit{Active DPP-Mode.}} We tuned $\gamma$ separately for the linear dynamical system (LDS), the Fetch mobile manipulator \cite{wise2016fetch}, Tosser from MuJoCo \cite{todorov2012mujoco}, and driving simulation \cite{sadigh2016planning} environments where each $\gamma$ has been experimented $100$ times with different true reward functions.

Fig.~\ref{fig:reward_learning} shows how the alignment value, defined in the paper, changes with different number of queries. We desire having large alignment values within a small number of queries. We highlighted the selected $\gamma$ parameters in the plots.

As can be seen from the results, the effect of $\gamma$ on performance was slight, and it was hard to select the ``best" $\gamma$. We qualitatively selected $\gamma=1$ for LDS and Tosser, $\gamma=4$ for Fetch, and $\gamma=0$ for Driver based on their slight advantage in learning rate with respect to query counts.

\subsection{Computation Infrastructure}
We used 2 different Ubuntu machines whose details are given below.
\begin{itemize}[nosep]
	\item Ubuntu 16.04, Intel$^{\textregistered}$ Xeon$^{\textregistered}$ Silver 4114 CPU @ 2.20GHz, 40 CPUs, 125GB RAM
	\item Ubuntu 18.04.2 LTS, Intel$^{\textregistered}$ Xeon$^{\textregistered}$ CPU @ 2.20GHz, 32 CPUs, 32GB RAM
\end{itemize}

\end{document}